\documentclass{article}

\usepackage{PRIMEarxiv}

\usepackage[utf8]{inputenc} 
\usepackage[T1]{fontenc}    
\usepackage{url}            
\usepackage{booktabs}       
\usepackage{amsfonts}       
\usepackage{nicefrac}       
\usepackage{microtype}      
\usepackage{fancyhdr}       
\usepackage{graphicx}       
\usepackage{soul}
\usepackage{xcolor}
\usepackage{times}
\usepackage{amsmath}
\usepackage{amssymb}
\usepackage{float}
\usepackage{subcaption}
\usepackage{varwidth}
\usepackage{adjustbox}
\usepackage{arydshln}
\usepackage{enumitem}
\usepackage[section]{placeins}

\usepackage{amsthm}
\usepackage{colortbl}
\usepackage{multirow}
\usepackage{algorithm}
\usepackage{algpseudocode}
\newtheorem{proposition}{Proposition}

\graphicspath{{media/}}     
\restylefloat{table}

\DeclareMathOperator*{\argmin}{arg\,min}
\pagestyle{fancy}
\thispagestyle{empty}
\rhead{ \textit{Fair Federated Learning for Heterogeneous Face Data}} 

\fancyhead[LO]{Samhita Kanaparthy et al.}

\title{F3: Fair and Federated Face Attribute Classification with Heterogeneous Data}

\author{
  Samhita Kanaparthy, Manisha Padala, Sankarshan Damle  \\
  Machine Learning Lab, IIIT Hyderabad\\
  Hyderabad, India \\
  \texttt{(s.v.samhita, manisha.padala, sankarshan.damle)@research.iiit.ac.in} \\
   \And
  Ravi Kiran Sarvadevabhatla\\
  CVIT Lab, IIIT Hyderabad\\
  Hyderabad, India \\
  \texttt{ravi.kiran@iiit.ac.in} \\\And
  Sujit Gujar\\
  Machine Learning Lab, IIIT Hyderabad\\
  Hyderabad, India \\
  \texttt{sujit.gujar@iiit.ac.in} \\
}

\begin{document}
\maketitle

\begin{abstract}

Fairness across different demographic groups is an essential criterion for face-related tasks, Face Attribute Classification (FAC) being a prominent example. Apart from this trend, Federated Learning (FL) is increasingly gaining traction as a scalable paradigm for distributed training. Existing FL approaches require data homogeneity to ensure fairness. However, this assumption is too restrictive in real-world settings. We propose F3, a novel FL framework for fair FAC under data heterogeneity. F3 adopts multiple heuristics to improve fairness across different demographic groups without requiring data homogeneity assumption. We demonstrate the efficacy of F3 by reporting empirically observed fairness measures and accuracy guarantees on popular face datasets. Our results suggest that F3 strikes a practical balance between accuracy and fairness for FAC.

\keywords{Fairness, Vision, Federated Learning, Data Heterogeneity}
\end{abstract}

\section{Introduction}

\label{sec:intro}

{Face Attribute Classification} (FAC) finds prominence for tasks such as gender classification~\cite{gender}, face verification~\cite{face1}, and face identification~\cite{face2}. Recently, researchers highlight a critical issue in FAC: attribute prediction may be biased towards specific demographic groups. For instance, \cite{buolamwini2018gender} show that for the gender classification task on the MSFT dataset, the error rate for `darker' faces is approximately eighteen times greater than that on `lighter' faces\footnote{``Facial Recognition is accurate, if you’re a white guy", New York Times, 2018.}. Further, face recognition-based criminal detection systems are prone to classify innocent people with `darker' faces as suspects~\cite{criminalbias}. This bias in predictions is \emph{unfairness}. It is often associated with the unavailability of balanced datasets~\cite{torralba2011unbiased}. To overcome this issue, researchers have introduced balanced, large-scale datasets~\cite{fairface}.

Of late, Federated Learning (FL) has emerged as a popular paradigm for scalable distributed training involving large-scale data~\cite{google-fedml}.  FL comprises (i) independent clients that train local models on their private data and (ii) a central aggregator which combines these local models, using heuristics, to derive a generalisable global model~\cite{aggr}. Unfortunately, traditional FL models typically focus on standard performance measures (e.g., accuracy) and inherit the unfairness related drawbacks of non-FL approaches~\cite{FedCV}.

%
\begin{figure}[t]
    \centering
    \includegraphics[width=0.9\linewidth]{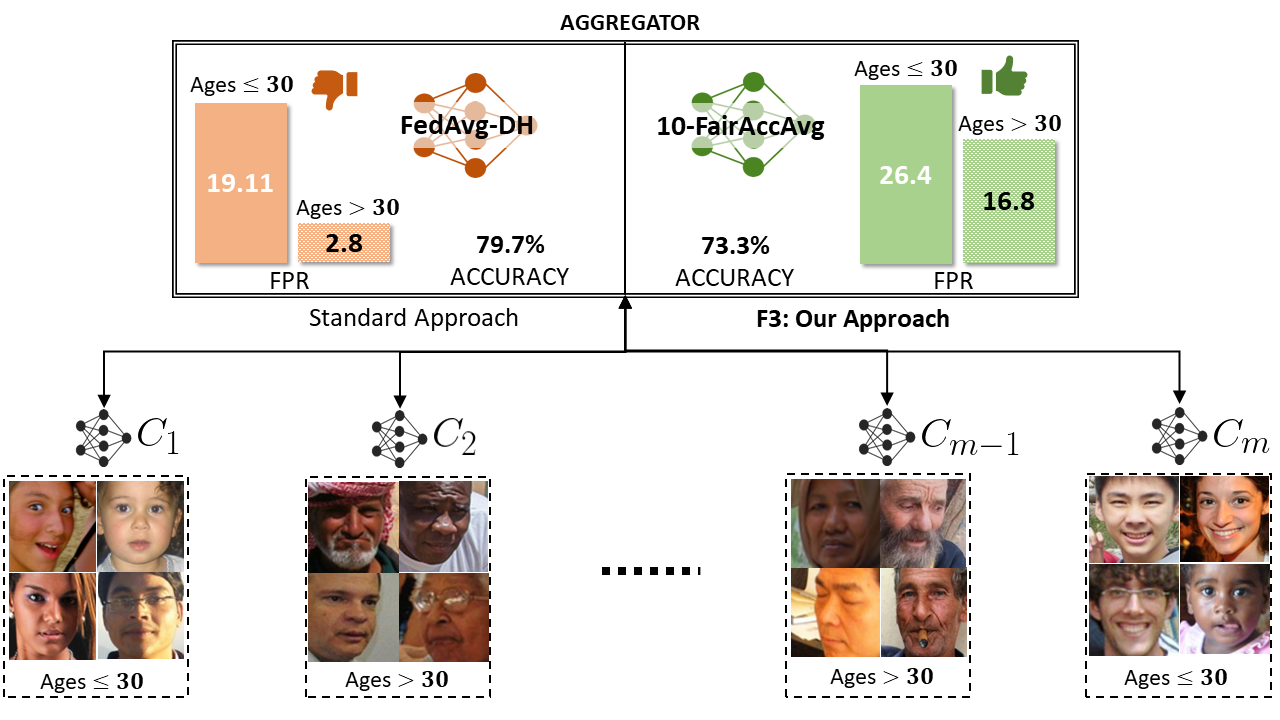}
    \caption{Our F3 framework considers FL with Data Heterogeneity for Fair Face Attribute Classification. With age as a sensitive attribute, each client ($C_1,\ldots,C_m$) has access to samples of only a single age group. The difference in error rate (FPR) observed for the two age groups ($\leq 30$ and $> 30$) is high for the state-of-the-art approach, \texttt{FedAvg-DH}. In contrast, our novel heuristic, 10-\textsf{FairAccAvg}, noticeably improves fairness (i.e., gap between error rates across age groups) while maintaining reasonable accuracy.}
    \label{fig:teaser}
\end{figure}
%
To address the unfairness in FL many methods have been introduced (e.g., \cite{ditto,Fair,fairfed}). However, these methods inherently assume FL clients with  homogeneous data, i.e, they assume that FL clients' data contains samples from all the demographic groups of a particular \emph{sensitive attribute}. E.g., with `age' as the sensitive attribute, the client's local training data would have samples from both `young' and `adult' demographic groups. However, clients' data is likely to be heterogeneous in most FL settings.

For instance, a smartphone belonging to a `young' user may have content comprising majorly to its peers~\cite{ruan2021fedsoft}, i.e., inter-client heterogeneity in terms of age. Similarly, geographically separated clients may exhibit inter-client heterogeneity in race. This data heterogeneity may, in turn reduce fairness for tasks such as FAC. Consequently, one \emph{cannot} directly adopt the existing approaches~\cite{agarwal18,Jung_2021_CVPR,fnnc} to improve fairness.

\smallskip
\noindent\textbf{Our Approach:} To address the shortcomings mentioned above, we propose \emph{F3}, a novel \underline{F}L framework for \underline{F}air \underline{F}ace Attribute Classification -- refer Fig.~\ref{fig:teaser}. F3 guarantees fairness by employing appropriate aggregation heuristics for the extreme case of heterogeneous face data in the FL setting. In particular, the following are our contributions:
 \begin{enumerate}[leftmargin=*,noitemsep]
     \item We are first to introduce and study fair Face Attribute Classification (FAC) in FL under data heterogeneity (FL with DH). We prove that existing approaches to ensure fairness are not applicable in this setting (Proposition~\ref{prop::inf}).
    
    \item To incorporate fairness in FL with DH, we propose F3 (Sec.~\ref{sec::f3}) -- an FL framework for Fair FAC. With F3, we introduce novel aggregation heuristics: (i) \textsf{FairBest}, (ii) $\alpha$-\textsf{FairAvg}, and (iii) $\alpha$-\textsf{FairAccAvg} to improve the accuracy and fairness trade-off (Sec.~\ref{sec::heuristics}).
 
    \item Our heuristics outperform the state-of-the-art method, \texttt{FedAvg-DH}~\cite{fedavg}, ensuring 25\%-82\% improvement in fairness on popular face datasets (FairFace~\cite{fairface}, FFHQ~\cite{ffhq}, and UTK~\cite{utk} -- Table~\ref{table:biggg}).
    
    \item From our empirical results, we show that our heuristics achieve a practical trade-off between accuracy and fairness, with accuracy drop of 0.4\%-17\% compared to \texttt{FedAvg-DH}~\cite{fedavg} (Fig.~\ref{fig:scatter}).
    
\end{enumerate}

\section{Related Work}
\label{sec:rel_work}

\smallskip
\smallskip
\noindent\textbf{Balanced Face Datasets:} 
\cite{merler2019diversity} show that face datasets such as VGGFace2~\cite{vggface2}, MS-Celeb-1M~\cite{celeb1m}, MegaFace~\cite{megaface}, LFW~\cite{lfw} are imbalanced w.r.t. demographic groups such as gender and race. Research has also focused on constructing attribute balanced face datasets that include FairFace~\cite{fairface}, Balanced Faces in the Wild~\cite{bfw}, and Dive Face~\cite{sensnets}. However, the paradigm of training models on these data sets is typically centralized, unlike our FL setting, which assumes data heterogeneity.

\smallskip
\noindent\textbf{Fair Classification:}
Classical methods for fair classification~\cite{bechavod17,zafar17,calders10,dwork12,feldman15,kamiran09,kamishima11} ensure fairness in credit scoring, predicting criminal recidivism rates and individual incomes. A host of training methods incorporate fairness violation as part of optimisation~\cite{Jung_2021_CVPR,fnnc,lag-dual,fairconst}. The most popular approach among these is \emph{Lagrangian Multiplier Method} (LMM)~\cite{fairalm}. However, computing the fairness violation required in LMM is impossible in a heterogeneous data setting. A similar argument holds for~\cite{Jung_2021_CVPR}, where the authors consider knowledge distillation and use maximum-mean discrepancy based regularisation to ensure fairness. Researchers also consider semi-supervised \cite{jung2021} and unsupervised learning \cite{goyal2022vision} for fairness. However, these approaches implicitly assume data homogeneity across the sensitive attributes.

\smallskip
\noindent\textbf{Federated Learning (FL):} FL is employed in popular computer vision tasks such as image classification~\cite{fedv2,noniid1}, landmark classification~\cite{hsu2020federated}, and object detection~\cite{fedv1}. For most FL settings, the weighted-average ~(\texttt{FedAvg}~\cite{fedavg}) is used de facto. However, \texttt{FedAvg} is not designed to provide fairness. 
\cite{zeng2021improving} empirically show that models trained using \texttt{FedAvg} are more biased compared to a centrally trained model. While they propose an aggregation technique to improve fairness, it assumes data homogeneity which is at odds with our data heterogeneity setting. 

\smallskip
\noindent\textbf{FL with non-i.i.d. Data:} FL approaches  typically consider data heterogeneity for target labels  where each client has access only to a specific set of target labels, which is non-i.i.d. data.  \cite{noniid2} propose a reinforcement learning-based approach to improve accuracy in non-i.i.d. data. \cite{moon} use contrastive learning to ensure better performance. In addition, such aggregation heuristics for non-i.i.d. data~\cite{noniid2,noniid1} do \emph{not} consider fairness. Also, in all the above approaches, the heterogeneity is w.r.t. labels and not w.r.t. sensitive attributes.

{In summary, existing literature either considers fairness in a non-FL setting or only with data homogeneity in FL. In contrast, our framework, F3, holistically integrates fairness in FL setting with data heterogeneity. Also, unlike prior work, we consider data heterogeneity w.r.t. the sensitive attribute.}

In the next section, we formally define our FL setting and provide the fairness definitions.

\section{Preliminaries}

We consider Face Attribute Classification (FAC) task, where $\mathcal{X}$ is the universal set of face images, with binary labels $\mathcal{Y}=\{0, 1\}$ (e.g., male or female), and sensitive attribute $A \in \mathcal{A}$. Here, $A$ can be age, race, or gender. The sensitive attribute takes a finite set of values, $A = \{a_1, \ldots,a_s\}$. {E.g., the sensitive attribute \emph{age} can take values such as `young' or `adult'}. We next describe our FL setting for FAC.

\subsection{Federated Learning (FL) Setting\label{sec::prelimFL}}

In FL, the data is distributed across multiple parties referred to as clients. Let $\mathcal{C} = \{C_1,\ldots,C_m\}$ represent the set of clients; each $C_i$ owns a private and finite dataset $D_i \subset \mathcal{X} \times \mathcal{Y} \times A$ containing $n_i$ samples. Each $C_i$ trains its local model $h_{\theta_{i,t}}:\mathcal{X} \rightarrow \mathcal{Y}$ parameterized by $\theta_{i,t}$ at round $t$. 

At each round $t$, a random subset of clients $S_t\subseteq \mathcal{C}$ communicate their locally updated model parameters $\Theta_{t}=\{\theta_{i,t}~|~C_i\in S_t\}$ to the aggregator. The aggregator combines all communicated model parameters to obtain the global parameters at round $t$, $\phi_t$, using a heuristic choice function $\mu:\Theta_t\rightarrow \phi_t$. The aggregator then communicates the model parameters back to the clients. Then clients initialise their local model with these parameters and train further. This back and forth process is repeated multiple times till convergence. Fig.~\ref{fig:f3} provides an illustration of FL setting for F3 framework. 

We next present the fairness notions explored in this paper.
%
%
\begin{figure}[!t]
    \centering
    \includegraphics[width=0.9\linewidth,trim={10pt 100pt 10pt 10pt}]{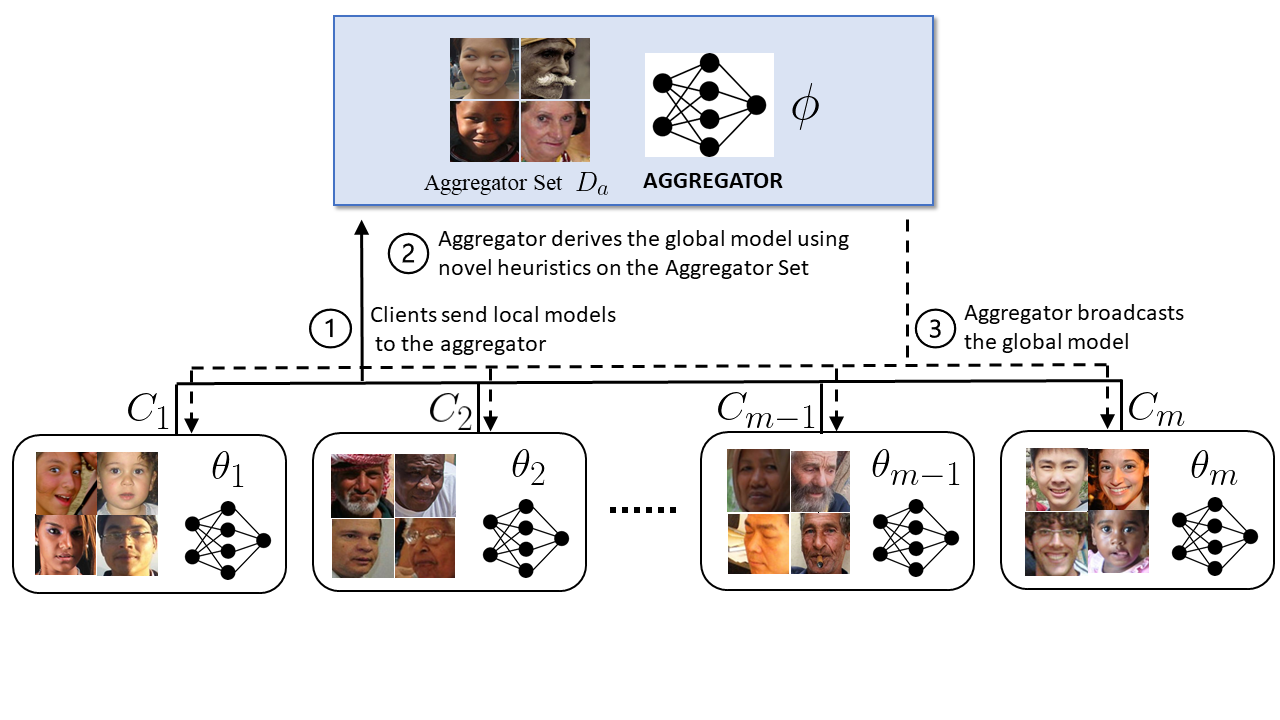}
    \caption{F3: Our FL Approach. The aggregator combines the client models $\theta_1,\ldots,\theta_m$ based on our proposed heuristics~(Sec.~\ref{sec::heuristics}) to obtain the global model $\phi$. Our heuristics depend on the fairness and accuracy scores computed on the aggregator set $D_a$. 
    }
    \label{fig:f3}
\end{figure}


\subsection{Fairness Notions}\label{FairNot}
The standard notions for fair classification depend on the error rates: False negative rate (FNR), and False positive rate (FPR). For a face attribute classifier $h$, given a face image $x$ with true label $y$ and sensitive attribute $a \in A$, we have $FNR = \Pr(h(x) \neq y | y = 1)$ and $FNR_a = \Pr(h(x) \neq y | A = a, y = 1), \forall a \in A$. Likewise, $FPR = \Pr(h(x) \neq y | y = 0)$, and $FPR_a = \Pr(h(x) \neq y | A = a, y = 0), \forall a \in A$. $FNR_a$ and $FPR_a$ are the error rates observed on the data samples belonging to a particular demographic group with sensitive attribute $a\in A$. E.g., consider a FAC task for `gender' classification with `age' as the sensitive attribute. The attribute comprises \{`young', `adult'\} as the demographic groups. Now, consider the following group-fairness notions.

\smallskip
\noindent\textbf{Equality of Opportunity (EOpp) \cite{chouldechova2017fair}:} A classifier $h$ satisfies EOpp for a distribution over $(\mathcal{X}, \mathcal{Y}, A)$ if: $FNR_a = FNR, \forall a$. We denote the violation in EOpp as $\Delta_{EOpp} = \max(\{ FNR_a - FNR | \forall a \in A\})$. That is, $\Delta_{EOpp}$ is the maximum disparity in $FNR$ across the demographic groups. Intuitively, EOpp ensures that the probability of predicting a `male' face as `female' is the same across age groups.

\smallskip
\noindent\textbf{Equalized Odds (EO)~\cite{hardt2016equality}:} A classifier $h$ satisfies EO over $(\mathcal{X}, \mathcal{Y}, A)$ if: $FNR_a = FNR$ and $FPR_a = FPR ~\forall a$. $\Delta_{EO} = \max(\max(\{FPR_a - FPR | \forall a \in A\}), \max(\{FNR_a - FNR | \forall a \in A\}))$ denotes violation in EO. EO states that the probability that the model predicts a `male' face to be `female' is independent of age.

\smallskip
\noindent\textbf{Accuracy Parity (AP)~\cite{AP}:} A classifier $h$ satisfies AP for a distribution over $(\mathcal{X}, \mathcal{Y}, A)$ if: $FPR + FNR = FPR_a + FNR_a, \forall a$. $\Delta_{AP}= \max(\{FPR_a - FPR | \forall a \in A\}) + \max(\{FNR_a - FNR | \forall a \in A\})$ denotes violation in AP. AP states that the overall classification error must be equal across the age groups.


To incorporate these fairness notions in FAC, the standard technique is to train a model that maximises accuracy while minimising the violation in these fairness notions. {Towards this, the state-of-the-art approach, Lagrangian Multiplier Method (LMM)~\cite{fairalm}, adopts a loss function that simultaneously incorporates cross-entropy loss $l_{CE}$ and the violation in fairness constraint $(\Delta_{EOpp}, \Delta_{EO}, \Delta_{AP})$, weighted by the lagrangian multiplier $\lambda \in \mathbb{R}^{+}$.} 

Formally, in LMM, the loss $L_{LMM}(h(X),Y, A)$ for a classifier $h$, for $k \in \{EOpp, EO, AP\}$ and $(X, Y) \subseteq \mathcal{X} \times \mathcal{Y}$, is as follows.

\begin{align}\label{eq:lag}
    L_{LMM}(\cdot) =\mathbb{E}_{(x,y)\sim (X, Y)}[l_{CE}(h(x), y)]+\lambda\Delta_{k}.
\end{align}

{LMM requires each client to possess data belonging to every demographic group. Later in Proposition~\ref{prop::inf}, we show that LMM will \emph{not} work in FL with DH, necessitating the need for newer approaches. Motivated by this, we propose F3, a framework for ensuring fairness in FL with DH.}

\section{Methodology}

We begin by motivating the problem and formally show that existing FL approaches for ensuring fairness cannot be applied in our setting. Subsequently, we describe our framework F3, including the proposed heuristics.

\subsection{Motivation}

\label{sec:fac_DH}
Existing approaches aggregate locally trained fair models to obtain a globally fair model~\cite{fairalm}. Typically, each FL client trains a local classifier imposing fairness constraints. These methods assume data homogeneity w.r.t. sensitive attributes across clients. However, this assumption is practically restrictive.

In reality, each client might only possess samples from an individual demographic group. E.g., samples belonging to `young' age group when age is sensitive attribute. We refer to this scenario as Federated Learning with Data Heterogeneity (FL with DH). 
However, as we show next with Proposition~\ref{prop::inf}, the issue with DH is that the error rates for the demographic groups not present in a particular client's data are not defined. As a result, the client's fairness violation component (Sec.~\ref{FairNot}) cannot be computed.

\begin{proposition}\label{prop::inf}
In Lagrangian Multiplier Method (LMM),  $L_{LMM}$ (Eq.~\ref{eq:lag}) -- the loss function for training the classifier is not defined in FL with DH.
\end{proposition}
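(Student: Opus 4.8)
The plan is to reduce the statement to a single claim: that the fairness-violation term $\Delta_k$ appearing in $L_{LMM}$ is undefined at \emph{every} client in the FL with DH setting, since this term enters $L_{LMM}(\cdot)=\mathbb{E}_{(x,y)}[l_{CE}(h(x),y)]+\lambda\Delta_k$ additively and the expectation term is finite. First I would recall that for each $k\in\{EOpp, EO, AP\}$, the violation $\Delta_k$ is built from the group-conditional error rates via $\Delta_{EOpp}=\max(\{FNR_a - FNR\,|\,a\in A\})$, $\Delta_{EO}=\max(\max(\{FPR_a-FPR\}),\max(\{FNR_a-FNR\}))$, and $\Delta_{AP}=\max(\{FPR_a-FPR\})+\max(\{FNR_a-FNR\})$, where by definition $FNR_a=\Pr(h(x)\neq y\mid A=a, y=1)$ and $FPR_a=\Pr(h(x)\neq y\mid A=a, y=0)$ are \emph{conditional} on membership in demographic group $a$.

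Next I would invoke the DH assumption formally. Since LMM trains each local classifier by optimising its local loss over $D_i$, the rates $FNR_a$ and $FPR_a$ must be evaluated on the client's own data; in DH, each $D_i$ contains samples from exactly one group $a_j\in A$. As fairness is meaningful only when $|A|=s\geq 2$, there exists at least one group $a'\neq a_j$ with no samples in $D_i$. The key step is then to observe that the empirical estimate of $FNR_{a'}$ (and likewise $FPR_{a'}$) is a ratio whose denominator counts samples in $D_i$ with $A=a'$ and the relevant label, which is zero; equivalently, $FNR_{a'}$ is conditioned on the event $\{A=a'\}$, which has zero probability under the client's distribution. Hence $FNR_{a'}$ and $FPR_{a'}$ take the indeterminate form $0/0$ and are undefined. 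Because $\Delta_k$ is a maximum (or, for AP, a sum of maxima) over a set containing the undefined quantity $FNR_{a'}-FNR$ and/or $FPR_{a'}-FPR$, the value of $\Delta_k$ is itself undefined, and therefore so is $L_{LMM}(\cdot)$.

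The main obstacle I anticipate is making the notion of ``not defined'' precise rather than heuristic: I must argue that no natural convention rescues the quantity. Specifically, I would need to rule out tacitly substituting a placeholder (e.g.\ setting the absent group's error rate to $0$, or to the global $FNR$), by noting that any such substitution is not implied by the definition in Sec.~\ref{FairNot} and, more importantly, cannot be computed from $D_i$ alone since the client provably possesses no information about group $a'$. Tying the definition of each rate to its ill-posed empirical estimator over $D_i$, and emphasising that the client cannot access other clients' private data to supply the missing conditional, is the crux that converts the intuitive ``missing group'' observation into a rigorous claim that $L_{LMM}$ is genuinely undefined in FL with DH.
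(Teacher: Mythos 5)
Your proposal is correct and follows essentially the same route as the paper: both arguments pin the failure of $L_{LMM}$ on the group-conditional error rate of the demographic group absent from a client's data, whose empirical estimator has a zero denominator under DH, rendering $\Delta_k$ (and hence the additive loss) undefined. The paper merely instantiates this w.l.o.g.\ for $\Delta_{EOpp}$ using the concrete differentiable formula of the violation with binary sensitive attribute $a_j\in\{0,1\}$ and notes that one of $\sum_j a_j$ or $\sum_j(1-a_j)$ vanishes, whereas you argue at the level of the conditional-probability definitions (conditioning on a zero-probability event) and additionally rule out rescue-by-convention; these are presentational differences, not a different proof idea.
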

 \begin{proof}

 W.l.o.g., let the $\Delta_{k}$ component of $L_{LMM}$ in Eq.~\ref{eq:lag} be $\Delta_{EOpp}$. For $C_i$ with local data $D_i$, $\Delta_{EOpp} = \left| \frac{\sum\limits_{j=1}^{|D_i|} (1 - p_j) y_j a_j}{\sum_{j=1}^{|D_i|} a_j} - \frac{\sum\limits_{j=1}^{|D_i|}(1 -  p_j) y_j (1 - a_j)}{\sum_{j=1}^{|D_i|} (1-a_j)}\right|$, where $p_j = h_i(x_j) \ \forall j \in \{1,\ldots, |D_i|\}$ and binary sensitive attribute $a_j \in \{0,1\}$~\cite{fnnc}. Under DH, every sample in $D_i$ will belong to only a single demographic group (e.g., $\leq$ 30 age group), i.e., either $a_j = 1, \forall j$ or $a_j=0, \forall j$. In both cases,  $\Delta_{EOpp}$ tends to $\infty$ as either $\sum_{j=1}^{|D_i|} (1-a_j) = 0$ or $\sum_{j=1}^{|D_i|} a_j = 0$;  consequently, $L_{LMM} \rightarrow \infty$. \qed
 \end{proof}

Proposition~\ref{prop::inf} holds for any fairness violation function (including $\Delta_{EO}, \Delta_{AP}$) that requires samples belonging to all the demographic groups. E.g., the loss functions defined in \cite{agarwal18,zafar17,zhang18}. Thus, we cannot use these loss functions to train for fairness in FL with DH. Further, training only for accuracy compromises fairness~\cite{chouldechova17}, implying that standard approaches such as \texttt{FedAvg} cannot be adopted. As a result, we propose F3, a novel FL framework for fair FAC that employs different aggregation heuristics for achieving fairness.

\begin{figure*}[!t]
\begin{tabular}{c|c}
    \begin{subfigure}{0.5\textwidth}
    \centering
    \includegraphics[width=\columnwidth]{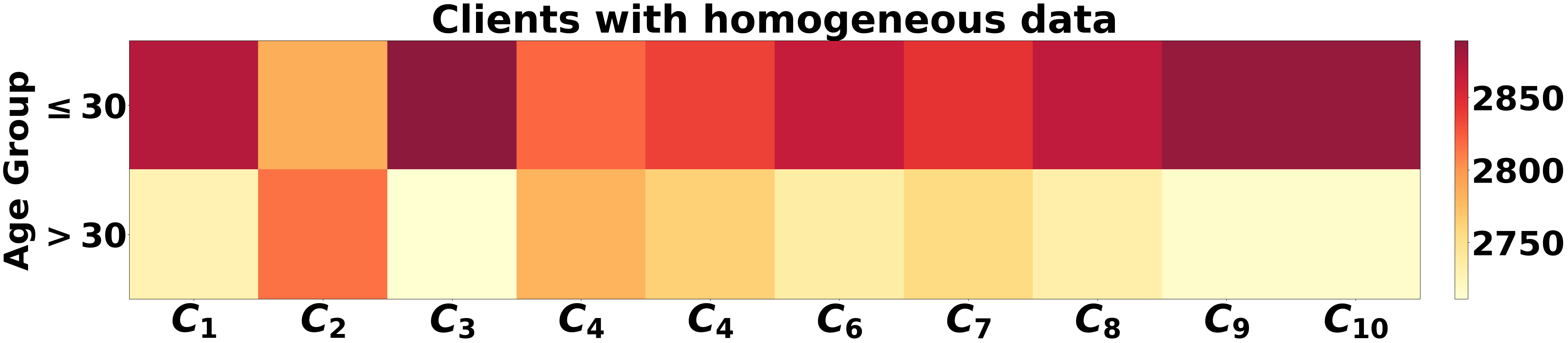}
    \end{subfigure} &
    \begin{subfigure}{0.5\textwidth}
    \centering
    \includegraphics[width=\columnwidth]{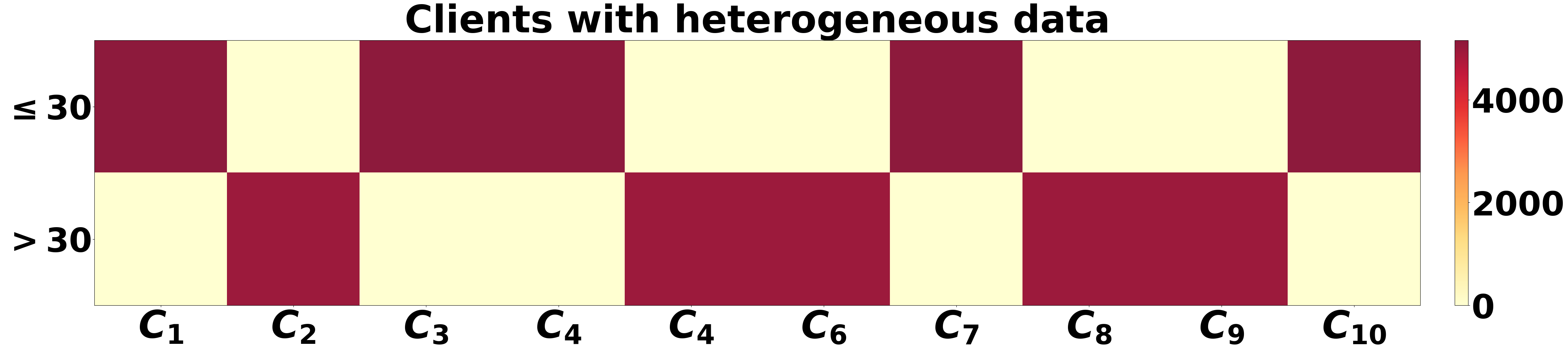}
    \end{subfigure}
    \end{tabular}
    \caption{An illustration of data heterogeneity across clients  with age as the sensitive attribute (FFHQ data set~\cite{ffhq}). In the \emph{heterogeneous} setting (right plot), each client has data belonging to only a specific age group ($\leq$ 30 or $>$ 30). In contrast, the \emph{homogeneous} setting (left plot) shows that each client has data almost equally distributed across both the age groups (See Section~\ref{sec:fac_DH}).}
    \label{fig:DH}
\end{figure*}

\begin{algorithm}[!t] 
\small
\caption{\label{algo::F3} F3 Framework}
\begin{algorithmic}[1]

\Require{(1) Each client $C_k \in \mathcal{C}$ has its private dataset $D_k$ s.t. $\theta_{k,0}\leftarrow \theta_{0}$. (2) Hyperparameters: maximum number of communication rounds $T$, number of local epochs $E$, learning rate $\eta$, accuracy tolerance $a$, threshold round $\tau$. (3) A heuristic choice function $\mu_\varsigma( \Theta_t)$ s.t. $\varsigma=\{\textsf{FairBest},$ $\alpha\mbox{-}\textsf{FairAvg},\alpha\mbox{-}\textsf{FairAccAvg}\}$ }.

\Ensure{Aggregated model $\phi$}
 \Procedure{LocalTraining$(k, \phi_{t})$}{}
\State $\theta_{k,t}\leftarrow \phi_t$
        \For{each local epoch $i = 1,2,\ldots,E$}
            \State{$\theta_{k,t} \leftarrow \theta_{k,t} - \eta\cdot\nabla_{\theta_{k,t}}L_k(h_{\theta_{{k,t}}}(\cdot),D_k)$}
        \EndFor
        \State{\Return $\theta_{k,t}$ to the aggregator}
  \EndProcedure
%
%
  \Procedure{Aggregation$(\mu_\varsigma(\cdot))$}{}
    \For {\text{each round $t = 1,2,\ldots,T-1$}}
        \For{each client $k \in S_t$ (in parallel)}
            \State{$\theta_{k,{t}} \leftarrow \textsc{LocalTraining}(k,\phi_{t})$}
        \EndFor
        \State $\phi_{t+1} \leftarrow \mu_\varsigma( \Theta_t)$ where $\Theta_{t}=\{\theta_{k,t}~|~k\in S_t\}$

        \If{$t>\tau$}   
        \State $\Delta_{Acc} = |Acc(h(\phi_t)) -$ \Statex $\qquad\qquad\qquad\qquad\max\left(Acc(h(\phi_{t-1})),\ldots,Acc(h(\phi_{t-\tau}))\right)|$
        
        \If{$\Delta_{Acc}<a$}
        \State \Return $\phi_{best}$  
        \Comment{\footnotesize{\textcolor{blue}{Training stops; As $\Delta_{Acc}$, across ``$\tau$" communication rounds is less than tolerance ``$a$"}}}
        
        \ElsIf{$\Delta_{Acc}>a$}
        \State $\phi_{best} \leftarrow \phi_{t+1}$
        
        \ElsIf{$\Delta_{Acc}>0$}
        \State $\phi_{best} \leftarrow \textsf{FairCheck}(\phi_{best},\phi_{t+1})$  
        \Comment{\footnotesize{\textcolor{blue}{outputs the model parameters that produce lesser fairness violation}}}
        \EndIf
        
        \Else
        \State $\phi_{best} = \phi_{t+1}$
        \EndIf
    \EndFor
   \State \Return $\phi_{best}$
 \EndProcedure
\end{algorithmic}
\end{algorithm}
\normalsize
%
%

\subsection{F3 Framework\label{sec::f3}}

The ingenuity of our framework, F3, is to adopt different aggregation heuristics that prioritise the local client models, which perform desirably in terms of fairness. We next briefly summarise F3 as follows.

\smallskip
\begin{enumerate}[noitemsep, leftmargin=*]
    \item \textit{Local Training}. Each $C_i$ trains its model $h_{\theta_i}$ only for maximising accuracy (i.e., minimising cross-entropy loss $l_{CE}$), $L_i(h_{\theta_i}, D_i) = \mathbb{E}_{(x,y) \sim D_i}[l_{CE}(h_{\theta_i}(x), y)]$. A subset of clients $S_t$, at each round $t$, communicate their model parameters to the aggregator. 
    \item \textit{Model Aggregation}. For aggregation, we propose novel heuristic choice functions ($\mu$) to control the accuracy and fairness trade-off~(Sec.~\ref{sec::heuristics}). We consider an aggregator set $D_a$ comprising samples belonging to each demographic group to execute these heuristics~\cite{val}. Note that $D_a$ has a limited number of samples to be used directly for training. Aggregator derives a global model employing our heuristics~(Sec.~\ref{sec::heuristics}).
    \item \textit{Model Communication.} The aggregator then communicates the global model parameters to each client. The clients initialise their models with these parameters and further train on it to maximise their accuracy.
\end{enumerate}

Fig.~\ref{fig:f3} depicts F3 framework, and Algorithm~\ref{algo::F3} provides a procedural outline of F3. Next, we introduce various heuristic choice functions used in Model Aggregation step.

\subsection{Heuristics for Fair FL\label{sec::heuristics}}
{Recall that in FL with DH, firstly, we cannot train the local client models for fairness. Thus, standard heuristics for ensuring fairness do not work in our setting (Proposition~\ref{prop::inf}). Next, as \texttt{FedAvg} aggregates a random (sub)set of models at each round, it fails to ensure fairness as the models for aggregation may potentially be biased. In turn, they may amplify the unfairness in the global model.}

Contrary to this, our key idea is to achieve fairness in FL with DH by deliberately selecting the subset of local models for aggregation that perform desirably w.r.t. to fairness and accuracy (Step 2 in Fig. \ref{fig:f3}). The aggregator quantifies the performance of local client models based on their empirical fairness violation and accuracy computed on the aggregator set $D_a$. Specifically, let $\Delta_{loss}(h_{i}(\theta_{i,t}))$ denote any fairness violation for $C_i$'s model on $D_a$ at any round $t$. Also, $Acc(h_{i}(\theta_{i,t}))$ is the accuracy of $C_i$'s model over $D_a$ at $t$. Based on this, we introduce the following heuristics that aim to strike a practical balance between fairness and accuracy.

\begin{itemize}[leftmargin=*]

\item[$\bullet$] {\textsf{FairBest}}: In this, aggregator selects a specific model from the set of local models, which provide the least fairness violation on the aggregator set $D_a$. That is, the global aggregation parameter at a round $t$, are,
$$
{\mu_{\textsf{FairBest}}(\Theta_t)  \triangleq \phi_{t} =\theta_{i^*,t}~\mbox{~s.t.~} i^* = \underset{i}{\argmin}\left\{\Delta_{loss}(h_{i}(\theta_{i,t}))\right\}}
$$

\smallskip
\item[$\bullet$] {$\alpha$-\textsf{FairAvg}}: This heuristic generalizes \textsf{FairBest} by selecting the top $\alpha$\% of local models and then take their weighted average. More formally, consider the set $F_t$ which comprises the top-$\alpha$\% of clients in \textit{increasing} order of $\Delta_{loss}(h_{i}(\theta_{i,t}))$. Now,

\begin{equation}\label{eqn::fairavgNEW}
 \mu_{\alpha\mbox{-}\textsf{FairAvg}}(\Theta_t)  \triangleq  \phi_{t} = \sum_{i\in F_t}\frac{n_i}{\sum_{j\in F_t} n_j}\theta_{i,t}
\end{equation}

\smallskip
\item[$\bullet$] {$\alpha$-\textsf{FairAccAvg}}:  Aggregator selects the top-$\alpha$\% of local model parameters that give the best ratio of accuracy with fairness violation on $D_a$ and take their weighted average. Consider the set $F_t$ which comprises the top-$\alpha$\% of clients in \textit{decreasing} order of the ratio $\frac{Acc(h_{i}(\theta_{i,t}))}{\Delta_{loss}(h_{i}(\theta_{i,t}))}$. Again,

\begin{align}\label{eqn::fairaccNEW}
 \mu_{\alpha\mbox{-}\textsf{FairAccAvg}}(\Theta_t)  \triangleq  \phi_{t} = \sum_{i\in F_t}\frac{n_i}{\sum_{j\in F_t} n_j}\theta_{i,t}
\end{align}

\if 0
Let $F$ denote the set of the first $\alpha$ models.
\begin{equation*}\label{fairavg}
   \mu_{\alpha\mbox{-}\textsf{FairAvg}}(\Theta_t)  \triangleq  \phi_{t} =\sum_{j\in F} \left(\frac{n_j}{\sum_l n_l} \theta_{(j,t)}\right)
\end{equation*}

\item {$\alpha$-\textsf{FairAccAvg}}. Similar to \textsf{FairAvg}, aggregator sorts the local models in decreasing order of the ratio of accuracy and $\Delta_{loss}$ on $D_a$. It then takes the weighted average of the first $\alpha$ percentage of the local models. Let $F$ denote the set of the first $\alpha$ models.
\fi

\end{itemize}

{Observe that as $\alpha$ increases, more and more local models are considered for aggregation akin to \texttt{FedAvg} with heterogeneous data. That is, with an increase in $\alpha$, Eq.~\ref{eqn::fairavgNEW} and Eq.~\ref{eqn::fairaccNEW} tend to $\sum_{i\in \mathcal{C}} \frac{n_i}{\sum_j n_j} \theta_{i,t}$ (\texttt{FedAvg} aggregation). That is, $\alpha$-\textsf{FairAvg} and $\alpha$-\textsf{FairAccAvg} tend to mimic \texttt{FedAvg} (see Fig.~\ref{fig:Alpha-plot}).}

Given these heuristics, we provide F3's performance in terms of accuracy and fairness violation on three real-world face datasets in the next section.

\section{Experiments\label{sec::exps}}

%
\begin{table}[t]
    \centering
    \adjustbox{max width=\textwidth}{
    \begin{tabular}{|c|c|c|c|c|c|c|}
    \hline
    \textbf{Dataset}     &  \textbf{Input Size} & \textbf{\# of faces } & \multicolumn{2}{c|}{\textbf{Sensitive Attribute (Age)}} & \multicolumn{2}{c|}{\textbf{Face Attribute Classification (FAC)}} \\

\hline
\multirow{2}{*}{FairFace~\cite{fairface}  }    & \multirow{2}{*}{$128\times 128\times 3$}  & \multirow{2}{*}{$\approx$98K} & \multicolumn{2}{c|}{\hfill $\leq 30$ years\hfill\vline\hfill $> 30$ years\hfill\null}  & \multicolumn{2}{c|}{Gender} \\
\cline{6-7}
    &  &  & \multicolumn{2}{c|}{\hfill ($\approx$53K)\hfill\vline\hfill   ($\approx$45K)\hfill\null} & \multicolumn{2}{c|}{\hfill Male ($\approx$51K)\hfill\vline\hfill Female ($\approx$45K)\hfill\null} \\
\hline
\multirow{2}{*}{FFHQ~\cite{ffhq} }    & \multirow{2}{*}{$128\times128\times3$}  & \multirow{2}{*}{$\approx$70K} & \multicolumn{2}{c|}{\hfill $\leq 30$ years\hfill\vline\hfill $> 30$ years\hfill\null}  & \multicolumn{2}{c|}{Gender} \\
\cline{6-7}
    &  &  & \multicolumn{2}{c|}{\hfill ($\approx$35K)\hfill\vline\hfill   ($\approx$34K)\hfill\null}    & \multicolumn{2}{c|}{\hfill Male ($\approx$32K)\hfill\vline\hfill Female ($\approx$37K)\hfill\null} \\

\hline
\multirow{2}{*}{UTK~\cite{utk}  }    & \multirow{2}{*}{$48\times 48\times 1$}  & \multirow{2}{*}{$\approx$23K} & \multicolumn{2}{c|}{\hfill $\leq 30$ years\hfill\vline\hfill $> 30$ years\hfill\null} & \multicolumn{2}{c|}{Gender} \\
\cline{6-7}
    &  &  & \multicolumn{2}{c|}{\hfill ($\approx$12K)\hfill\vline\hfill   ($\approx$11K)\hfill\null}     & \multicolumn{2}{c|}{\hfill Male ($\approx$6K)\hfill\vline\hfill Female ($\approx$4K)\hfill\null} \\
\hline
    \end{tabular}}
    \caption{Summary Statistics of Face Datasets}
    \label{tab:datasets}
\end{table}

We conduct our experiments on the following face datasets: FairFace \cite{fairface}, FFHQ \cite{ffhq}, and UTK \cite{utk}. In this section, we first define our baseline \texttt{FedAvg-DH} for an appropriate comparison. Then, we provide our network architecture, training details and the FL setup. Finally, we present our results and the key takeaways.

\smallskip
\noindent\textbf{Baseline:} To validate our proposed heuristics, we compare their performance with the baseline \texttt{FedAvg-DH} in terms of accuracy and violation of fairness notions mentioned in Sec.~\ref{FairNot}. \texttt{FedAvg-DH} is simply \texttt{FedAvg} for our FL setting with Data Heterogeneity.

\smallskip
\noindent\textbf{FL Setup:} In our FL setting, for the baseline and our heuristics, we consider 50 clients, i.e., $\mathcal{C}=\{C_1,\ldots,C_{50}\}$. We randomly distribute the training data such that each client has data samples of only a particular demographic group to ensure data heterogeneity. Each client's model is locally trained on its private data. The global model aggregation is performed periodically till convergence. The training details specific to each dataset follow next. At each aggregation round $t$, we let $S_t=\mathcal{C}$.

\smallskip
\noindent\textbf{Training Details:}
We focus on three popular face datasets: FairFace~\cite{fairface}, FFHQ~\cite{ffhq}, and UTK~\cite{utk}. 
For each of these, we consider `age' as the sensitive attribute and `gender' as the predicting label. Further, we divide the samples into two age groups, $\leq 30$ and $> 30$ years. We distribute the data among the clients such that 50\% of the clients have access to data samples belonging to the age group $\leq 30$ and others have access to samples belonging to age group $> 30$. For all three datasets, each client's local data comprises $\approx$ 1K training samples. 

For FairFace~\cite{fairface} and FFHQ~\cite{ffhq}, we use a batch size of 256 and train the models for $T=50$ communication rounds with clients training their models locally for $E=4$ epochs (per round). We use learning rates of $\eta= 0.05$ and $\eta=0.01$ for FairFace and FFHQ, respectively. For UTK~\cite{utk}, we train using batch size 64 for $T=80$ and $E=2$ rounds. We also set the learning rate as $\eta = 0.01$. For all three datasets, we set the accuracy tolerance at $a=1\%$ and the threshold round at $\tau=20$.

\smallskip
\noindent\textbf{Model:} We adopt PyTorch's implementation of the standard ResNet-18 architecture for the base model \cite{fairalm}. We use SGD optimization in our training process. We run our experiments on 8 NVIDIA GeForce GTX 1080 with $10$ GB RAM.

\smallskip
\noindent\textit{Stopping Criteria.} Fig.~\ref{fig:AvgEpochs} depicts the change in accuracy and $\Delta_{EO}$ values over communication rounds for 10-\textsf{FairAvg} and 10-\textsf{FairAccAvg}. We remark that fairness guarantees often come at the cost of accuracy~\cite{lmm-acc}. However, our heuristics ensure that the fairness violation does not substantially increase as the accuracy improves. Our fair model selection and stopping criteria (Line 13, Lines 16-17 in Algorithm \ref{algo::F3}) ensure a practical accuracy and fairness trade-off. 


\begin{figure*}[!t]
    \centering
    \includegraphics[width=0.9\linewidth]{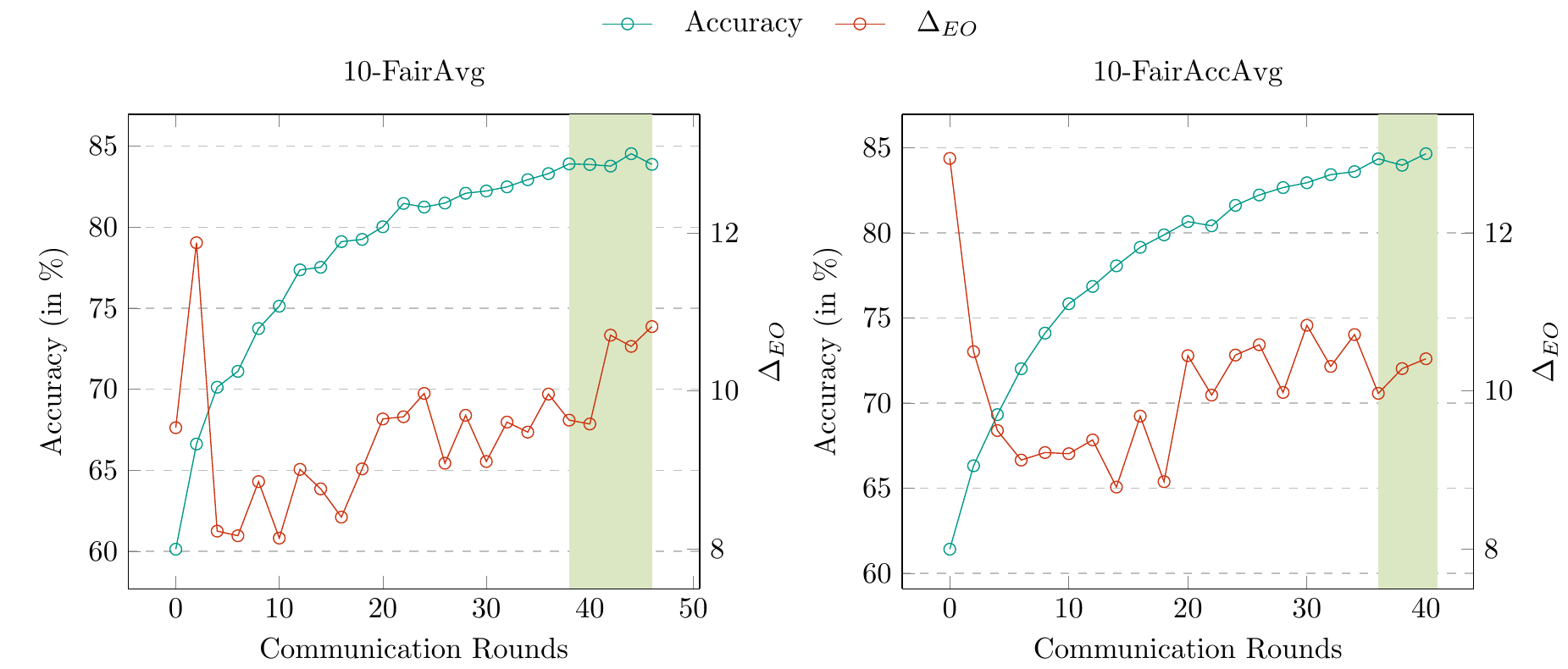}
    \caption{Accuracy and Fairness values over communication rounds for \textsf{10-FairAvg} and \textsf{10-FairAccAvg}. Observe that as training progresses, the fairness violation $(\Delta_{EO})$ does not significantly worsen while accuracy gradually increases.  The shaded region depicts the rounds where training is stopped -- i.e., the change in accuracy, $\Delta_{\text{Acc}}$,  (Algorithm~\ref{algo::F3}, Line 15) is below 1\%.}
    \label{fig:AvgEpochs}
\end{figure*}

\subsection{Results}\label{sec:results}
We now compare accuracy and fairness violations across different heuristics for each of the three datasets. We run every experiment 5 times and report the average and the standard deviation of the measures. For each instance, we randomly generate a aggregator set $D_a$ with the total number of samples between 10\%-20\% of the overall dataset size. Table.~\ref{table:biggg} provides the accuracy and fairness violation values of each of our proposed heuristics in comparison to \texttt{FedAvg-DH}. The maximum Coefficient of Variation (CoV)~\cite{cov} observed across our experiments is 0.96 and less than 0.2 for almost 70\% of the time. This indicates the stability of our approach and the results presented in this work. While we report our results for $\alpha$-\textsf{FairAvg} and $\alpha$-\textsf{FairAccAvg} with $\alpha = 10$ (10\% of the total local models), Fig.~\ref{fig:Alpha-plot} illustrates the accuracy and fairness trade-off on FairFace w.r.t. $\alpha$.

\begin{figure*}[!t]
    \centering
    \includegraphics[width=0.8\linewidth]{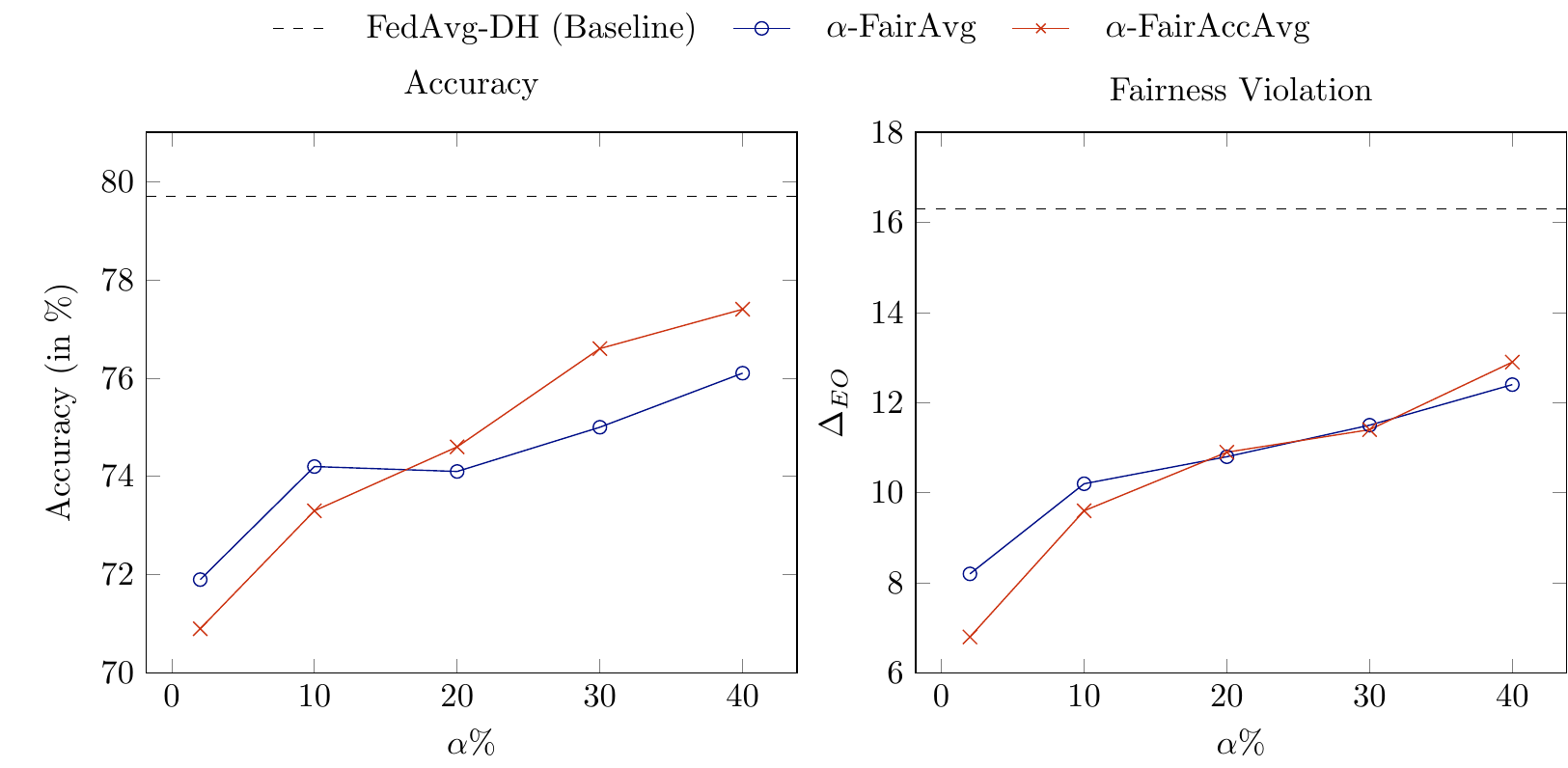}
    \caption{Accuracy and Fairness Violation with varying $\alpha$. We compare accuracy (left plot) and $\Delta_{EO}$ (right plot) of $\alpha$-\textsf{FairAvg} and $\alpha$-\textsf{FairAccAvg} on FairFace for different $\alpha$ values. Note that for smaller $\alpha$, we get improved fairness (50\% reduction in $\Delta_{EO}$ for $\alpha=10$) with marginal reduction in accuracy ($< 8\%$ ).}
    \label{fig:Alpha-plot}
\end{figure*}

\begin{table*}[!t]
    \centering
    \adjustbox{max width=\textwidth}{%
\begin{tabular}{|c|c|c|c|c|c|c|c|c|c|c|c|c|}
  \hline
  & \multicolumn{4}{c|}{{FairFace}~\cite{fairface}} & \multicolumn{4}{c|}{{FFHQ}~\cite{ffhq}} & \multicolumn{4}{c|}{{UTK}~\cite{utk}} \\
  \hline
 \multirow{2}{*}{{Heuristic}} &  \multirow{2}{*}{{Accuracy  ($\uparrow$)}} & 
 \multicolumn{3}{c|}{Reduction in Fairness Violation ($\downarrow$)} & \multirow{2}{*}{{Accuracy  ($\uparrow$)}} & 
 \multicolumn{3}{c|}{Reduction in Fairness Violation ($\downarrow$)} & \multirow{2}{*}{{Accuracy  ($\uparrow$)}} & 
 \multicolumn{3}{c|}{Reduction in Fairness Violation ($\downarrow$)}\\
 \cline{7-9} \cline{3-5} \cline{11-13}
   &  & \textbf{$\Delta_{EOpp}$} & \textbf{$\Delta_{EO}$}& \textbf{$\Delta_{AP}$} & & \textbf{$\Delta_{EOpp}$} & \textbf{$\Delta_{EO}$}& \textbf{$\Delta_{AP}$} & & \textbf{$\Delta_{EOpp}$} & \textbf{$\Delta_{EO}$}& \textbf{$\Delta_{AP}$}\\
 \hline
 \texttt{FedAvg-DH} & 79.7\% $\pm$ 0.8  & {$1.0 \pm 1.4$}  & {$16.3 \pm 3.3$}  & {$17.3 \pm 2.5$} & 90.8\% $\pm$ 0.5  & {2.4 $\pm$ 1.3}  & {$10.5\pm 1.3$} & {$13.0 \pm 0.5$} & 94.1\% $\pm$ 0.5  & {$1.5 \pm 1.2$}  & {$11.9 \pm 1.6$}  & {$13.5 \pm 2.7$}\\
 
 \textsl{FairBest} & 72.0\% $\pm$ 0.8  & \cellcolor{green!15}{$0.6 \pm 0.6$}  & \cellcolor{green!15}{$\;8.2\; \pm 1.1$}  & \cellcolor{green!15}{\;8.8\; $\pm$ 1.1} & 74.7\% $\pm$ 2.0  & \cellcolor{green!15}{1.8 $\pm$ 0.4}  & \cellcolor{green!15}{\;1.9\; $\pm$ 0.3} & \cellcolor{green!15}{\;3.2\; $\pm$ 0.8}  & 83.9\%  $\pm$ 0.5 & {$1.9 \pm 4.6$}  & {\;6.0\; $\pm$ 4.3}  & {10.0 $\pm$ 6.9}\\
 

 $10$-\textsl{FairAvg} & \cellcolor{magenta!25}{{74.2\% $\pm$ 1.1}}  & {3.3 $\pm$ 1.2}  & {10.2 $\pm$ 1.6} & {13.5 $\pm$ 2.5} & 82.2\% $\pm$ 0.5  & {2.7 $\pm$ 0.3}  & {\;2.7\; $\pm$ 0.3} & {\;4.3\; $\pm$ 0.6}  & 93.5\% $\pm$ 2.9  & \cellcolor{green!15}{0.9 $\pm$ 4.3} & {\;6.1\; $\pm$ 2.9}  & {\;7.8\; $\pm$ 4.1}\\

 $10$-{{\textsl{FairAccAvg}}} & 73.3\% $\pm$ 1.2  & {0.8 $\pm$ 0.7}  & {\;9.6\; $\pm$ 0.9}  & {10.4 $\pm$ 1.6} & \cellcolor{magenta!25}{{{82.7\% $\pm$ 0.5}}}  & \cellcolor{green!15}{1.8 $\pm$ 1.0}  & {\;2.0\; $\pm$ 0.5} & {\;3.9\; $\pm$ 1.1} & \cellcolor{magenta!25}{{{93.7\% $\pm$ 1.1}}}  & {3.1 $\pm$ 3.1}  & \cellcolor{green!15}{\;5.3\; $\pm$ 1.0} & \cellcolor{green!15}{\;6.8\; $\pm$ 2.1}\\

\hline
\end{tabular}}
\caption{Accuracy and Fairness Violations, $\Delta_k$, $k = \{$EOpp, EO, AP$\}$ on FairFace, FFHQ, and UTK of our heuristics compared to the baseline \texttt{FedAvg-DH}. Lower $\Delta_k$ ensures better fairness. The numbers in green highlight represent least value of $\Delta_{k}$ obtained by one of our heuristic, for each dataset. The values are significantly lesser than \texttt{FedAvg-DH}.  $10$-{{\textsl{FairAccAvg}}} and $10$-{{\textsl{FairAvg}}} (highlighted in magenta) provide the highest accuracy out of our proposed heuristics. }
\label{table:biggg}
\end{table*}

\smallskip
\noindent\textbf{Fairness Improvements:}   {We now compare $\Delta_k,\; k \in$ \{EOpp, EO, AP\} across our novel heuristics with \texttt{FedAvg-DH} as our baseline. From Table.~\ref{table:biggg}, observe that, in general, each of our heuristics outperform \texttt{FedAvg-DH} in terms of fairness. For FairFace, \textsf{FairBest} obtains fairness improvement of 40\% ($\Delta_{EOpp}$), 50\% ($\Delta_{EO}$), 50\% ($\Delta_{AP}$). For FFHQ, \textsf{FairBest} provides an improvement of 82\% for $\Delta_{EO}$. Further, all our proposed heuristics provide a minimum of 60\% improvement in fairness for FFHQ. For UTK, 10-\textsf{FairAccAvg} obtains a improved fairness of 50\% in $\Delta_{EO}$ and 55\% in $\Delta_{AP}$.}

\smallskip
\noindent\textit{Accuracy and Fairness Trade-off:}  Fig.~\ref{fig:scatter} illustrates the accuracy and fairness trade-offs of our heuristics with the baseline \texttt{FedAvg-DH}. The heuristics in the bottom-left corner of the plot assure the least fairness violation while maintaining high accuracy. As expected, \texttt{FedAvg-DH} (red-cross, Fig.~\ref{fig:scatter}) mostly guarantees the best accuracy but suffers from significant fairness violation. However, most of our heuristics provide lower fairness violations, for a marginal reduce in accuracy. We now elaborate on our results for each dataset, from Table~\ref{table:biggg}.

\noindent\underline{FairFace~\cite{fairface}:} {With our heuristic 10-\textsf{FairAccAvg}, we observe fairness improvements up to 42\% with an accuracy drop of only 7\% compared to \texttt{FedAvg-DH}. \textsf{FairBest} provides the least violation of fairness and 10-\textsf{FairAvg} provides the highest accuracy with FairFace.}

\noindent\underline{FFHQ~\cite{ffhq}:} {On FFHQ, we observe a fairness improvement up to 82\%. While 10-\textsf{FairAccAvg} guarantees the best accuracy, \textsf{FairBest} obtains 25\%, 82\% and 75\% reduction in $\Delta_{EOpp}, \Delta_{EO}$ and $\Delta_{AP}$ respectively. \textsf{10-FairAccAvg} provides a desirable trade-off between accuracy and fairness compared to others.}

\noindent\underline{UTK~\cite{utk}:} {For UTK, both 10-\textsf{FairAvg} and 10-\textsf{FairAccAvg} outperform \texttt{FedAvg-DH}. 10-\textsf{FairAvg} provides improved fairness by 40\% ($\Delta_{EOpp}$), 48\% ($\Delta_{EO}$) and 42\% ($\Delta_{AP}$) for an accuracy drop of only 0.6\%. Similarly, 10-\textsf{FairAvg} provides improved fairness by 55\% in $\Delta_{EO}$ and 49\% in $\Delta_{AP}$ for an accuracy drop of 0.4\%.}

\begin{figure*}[!t]
    \centering
    \includegraphics[width=\linewidth]{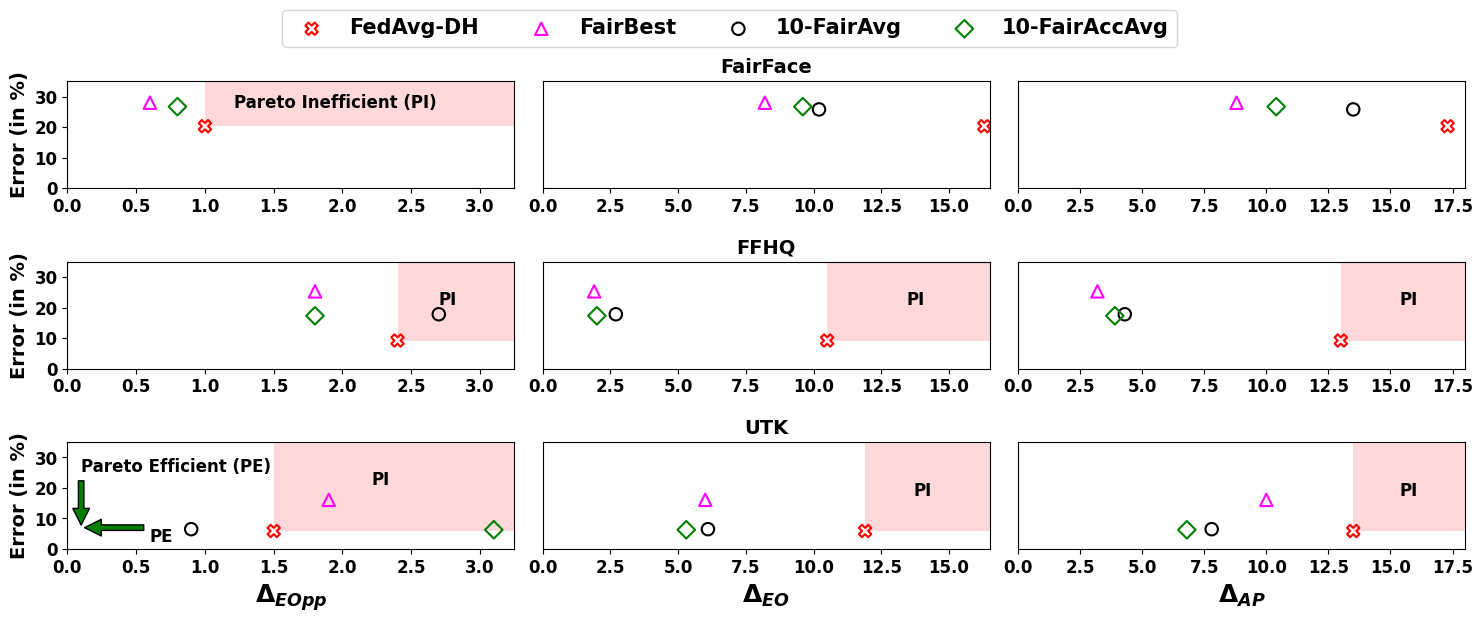}
    \caption{Accuracy and fairness trade-off. Note that the optimum point is bottom left, i.e., low \%-Error and low fairness violation ($\Delta_k$). \texttt{FedAvg-DH} (red cross marker) appears at the bottom right exhibiting low error (higher accuracy) at the cost of fairness. Our heuristics consistently provide a reduction in $\Delta_k$ -- for only a marginal increase in \% Error. The highlighted ``Pareto Inefficient" region is the area which is Pareto dominated by \texttt{FedAvg-DH}. Observe that, our proposed heuristics mostly lie outside the Pareto inefficient region.
    }
    \label{fig:scatter}
\end{figure*}

To rank performance w.r.t. accuracy-fairness trade-off, we use \emph{Mahalanobis} distance~\cite{bishop2006} between our heuristics' performance (Error and Fairness violation) from the origin (see Fig.~\ref{fig:scatter}).
The heuristic that achieves the least distance achieves a better trade-off. We observe that 10-\textsf{FairAccAvg} performs better than the other two heuristics for EOpp on FairFace, EO and AP on UTK, and for all three fairness notions on FFHQ, indicating its desirability. We provide the complete results in the Appendix.
In summary, we see that our heuristics perform remarkably well in terms of fairness while maintaining competitive accuracy compared to  \texttt{FedAvg-DH}. 

\smallskip
\noindent\textbf{Oracle to Achieve Data Homogeneity:} Our approach is designed to operate under Data Heterogeneity. We now demonstrate that even if we impose the relatively more restrictive condition of data homogeneity\footnote{Each client's local data comprises samples of each demographic group.},  F3, along with the proposed heuristics outperform the state-of-the-art methods for fairness. 

To do so, we consider an oracle that makes the data homogeneous by distributing the samples such that each agent has access to all the demographic groups. Under the presence of such an oracle, we can locally train models for fairness using LMM~\cite{fairalm} at the client level. Further, we can also aggregate these models to obtain a global model, referred to as \texttt{FedAvg-LMM}. Likewise, under the presence of such an oracle, aggregating the local models trained only for accuracy leads to \texttt{FedAvg}. 
Table~\ref{table:oracle} presents the accuracy and fairness performance of our heuristics compared to \texttt{FedAvg} and \texttt{FedAvg-LMM}.
The accuracies observed for \texttt{FedAvg} and \texttt{FedAvg-LMM} on FairFace, FFHQ and UTK are in-line with~\cite{fairalm,fairGAN}.

%
\begin{table}[!t]
    \centering\scriptsize
\begin{tabular}{|c|c|c|c|c|c|}
 \hline
 \textbf{Dataset} & \textbf{Heuristic} & \textbf{Accuracy} & \textbf{$\Delta_{EOpp}$} & \textbf{$\Delta_{EO}$}& \textbf{$\Delta_{AP}$}\\
 \hline
 \multirow{3}{*}{FairFace~\cite{fairface}} & \texttt{FedAvg}~\cite{fedavg} &  79.5\%&  2.1&  15.1&  17.2\\
& \texttt{FedAvg-LMM}~\cite{fairalm} &  76.3\%  &  8.4& 3.0&  11.4\\
& 10-\textsf{FairAccAvg} &  73.3\% & 0.8&  9.6&  10.4\\
\hline
\multirow{3}{*}{FFHQ~\cite{ffhq}} & \texttt{FedAvg}~\cite{fedavg} &  90.1\%&  0.8&  19.1&  20.5 \\
& \texttt{FedAvg-LMM}~\cite{fairalm} & 86.6\%&  2.3&  6.4 &  8.7 \\
& 10-\textsf{FairAccAvg} & 82.7\%&  1.8&  2.0& 3.8\\
 \hline
\multirow{3}{*}{UTK~\cite{utk}} & \texttt{FedAvg}~\cite{fedavg} & 94.1\%  &  1.8& 11.3&  13.1\\
& \texttt{FedAvg-LMM}~\cite{fairalm} & 93.3\% &  3.4&  6.4& 9.8\\
& 10-\textsf{FairAccAvg} &  93.7\%&  3.1&  5.3&  6.8\\
\hline
\end{tabular}
\caption{Accuracy and Fairness Violation, $\Delta_k$, $k=\{$EOpp, EO, AP$\}$: we show the performance of our heuristic 10-\textsf{FairAccAvg} in comparison to \texttt{FedAvg}~\cite{fedavg} and \texttt{FedAvg-LMM}~\cite{fairalm}, with oracle access to homogeneous data. Notice that, 10-\textsf{FairAccAvg} outperforms \texttt{FedAvg-LMM} w.r.t. fairness on UTK~\cite{utk} while maintaining comparable accuracies with \texttt{FedAvg}.}
\label{table:oracle}
\end{table}

\noindent\underline{Comparison of results with Oracle:}  {From Table~\ref{table:oracle}, observe that 10-\textsf{FairAccAvg} ensures similar or significantly better fairness compared to \texttt{FedAvg-LMM}. For instance, $\Delta_{AP}$ with 10-\textsf{FairAccAvg} has 8\%, 56\% and 30\% improvement compared to \texttt{FedAvg-LMM}. 10-\textsf{FairAccAvg} also has accuracy comparable to \texttt{FedAvg}, the default approach for FL trained on homogeneous data. E.g., in UTK with 10-\textsf{FairAccAvg}, the accuracy only drops by 0.4\% compared to \texttt{FedAvg}, while the improvement in fairness for $\Delta_{EO}$ and $\Delta_{AP}$ is 53\% and 48\%, respectively.}

 In the Appendix, we also provide results on additional experiments, including (i) an ablation study for different hyper parameters ($\alpha$, $a$ and $\tau$), (ii) different network architecture, and (ii) experiments on accuracy and fairness trade-off for different FAC tasks and sensitive attributes. 

\smallskip
\noindent\textbf{Discussion:} Overall, our results show that \textsf{FairBest} provides the lowest fairness violation, while $\alpha$-\textsf{FairAccAvg} provides the most practical accuracy and fairness trade-off. As $\alpha$ increases, we observe not only an increase in accuracy but also an increase in fairness violation (Fig.~\ref{fig:Alpha-plot}). As a result, a practitioner can select an appropriate $\alpha$ value for the desired accuracy-fairness trade-off.

The key advantage of the proposed heuristics within the F3  framework is that they ensure an appropriate selection of a \emph{subset} of clients' local models, balancing fairness violation and accuracy loss in the process. 
In contrast, \texttt{FedAvg-DH}~\cite{fedavg} selects local models \emph{randomly} leading to high fairness violation.


\section{Conclusion}

In this paper, we focus on Fair Attribute Classification (FAC) in FL setting with data heterogeneity. We observe that existing approaches to ensure fairness in FL do not work in a heterogeneous setting due to the unavailability of demographic-specific data samples across clients. To address this, we propose F3, a novel FL framework to achieve fairness in FAC. With F3, we introduce four aggregation heuristics that ensure fairness while simultaneously maximizing the model’s accuracy. Experimentally, our heuristics outperform the default counterpart in FL on challenging benchmark face datasets. Our heuristics’ performance is competitive even with state-of-the-art approaches designed for the homogeneous setting. Overall, the results suggest that F3 helps strike a practical balance between fairness and accuracy for FAC.

\bibliographystyle{unsrt}  
\bibliography{fedfairface}  

\newpage
\begin{appendix}

\section{Training over Epochs}
\begin{figure}[!ht]
    \centering
    \includegraphics[width=\linewidth]{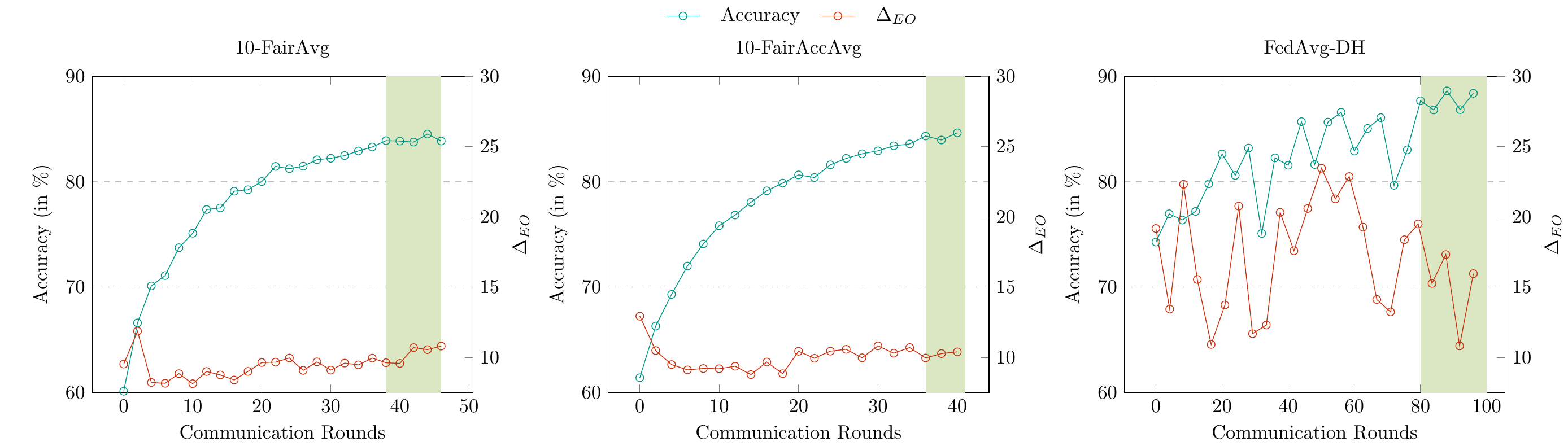}
    \caption{Accuracy and Fairness values over communication rounds for our heuristics \textsf{10-FairAvg} and \textsf{10-FairAccAvg} in comparison with \texttt{FedAvg-DH}. The shaded region depicts the rounds where training is stopped, i.e., the accuracy has not improved above 1\% over more than 10 rounds. Observe that in \texttt{FedAvg-DH} (plot 3) randomness in violation of fairness is high.}
    \label{fig:AvgEpochs1}
\end{figure}

\section{Effect of Changing Hyper Parameters}
Figure~\ref{fig5} shows the effect of changing $\alpha$ in $\alpha$-\textsf{FairAvg} (Eq. 2) and $\alpha$-\textsf{FairAccAvg} (Eq. 3)
    
\begin{figure}[!ht]
\centering
    \includegraphics[width=0.9\columnwidth]{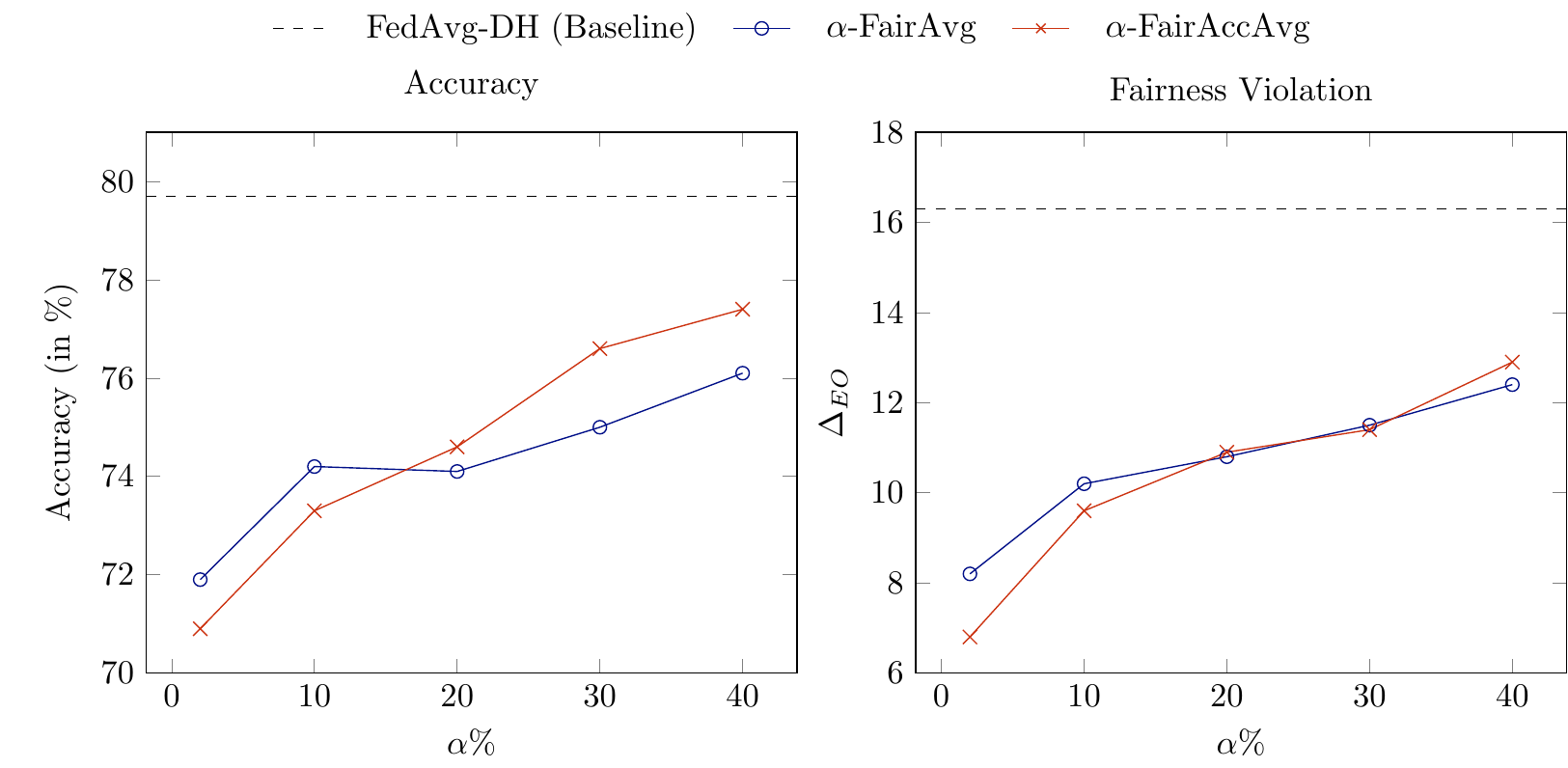}
    \caption{Accuracy and Fairness performance varying $\alpha$ in $\alpha$-FairAvg and $\alpha$-FairAccAvg (Fig. 5 in main paper)}
    \label{fig5}
\end{figure}

In Fig.~\ref{fig5}, we have seen that with increase in $\alpha$ our heuristics $\alpha$-\textsf{FairAvg} and $\alpha$-\textsf{FairAccAvg} tend to increase accuracy at a cost of fairness ($\Delta_{EO}$) mimicking standard method \texttt{FedAvg-DH}. We now give the performance of our heuristics w.r.t. remaining fairness notions.

    \begin{table}[!ht]
    \centering
    \adjustbox{max width=\columnwidth}{%
    \begin{tabular}{|c|c|c|c|c|c|c|}
    \hline
    \multirow{2}{*}{\textbf{Heuristic}} &
    \multirow{2}{*}{\textbf{$\alpha$}} & \multirow{2}{*}{\textbf{Accuracy} ($\uparrow$)} & \multicolumn{3}{c|}{Reduction in Fairness Violation ($\downarrow$)}\\
    \cline{4-6}
    & &  & \textbf{$\Delta_{EOpp}$} & \textbf{$\Delta_{EO}$}& \textbf{$\Delta_{AP}$}\\
    \hline
    \multirow{4}{*}{$\alpha$-\textsf{FairAvg}} 
     &  2 & 71.9\% & \cellcolor{green!25}{0.6}  & \cellcolor{green!25}{8.2} & \cellcolor{green!25}{8.8}\\
     & 10 & 74.2\%  & 3.3  & 10.1 & 13.5 \\
     & 20 & 74.1\% & 1.1  & 10.8 & 12\\
     & 30 & 75.0\%  & 2.1  & 11.5 & 17.2 \\
     & 40 & \cellcolor{magenta!25}{76.1\%}  & 1.8  & 12.4 & 22.6 \\
    \hline
    \multirow{4}{*}{$\alpha$-\textsf{FairAccRatio}} 
    & 2 & 70.9\%  & \cellcolor{green!25}{0.5}  & \cellcolor{green!25}{6.8} & \cellcolor{green!25}{7.4} \\
    & 10 & 73.3\%  & 0.8  & 9.6 & 10.4 \\
    & 20 & 74.6\%  & 2.5  & 10.9 & 13.4 \\
    & 30 & 76.6\%  & 2.7  & 11.4 & 14.1 \\
    & 40 & \cellcolor{magenta!25}{77.4\%}  & 1.4  & 12.9 & 14.3 \\
    \hline
    \end{tabular}}
    \caption{
    Accuracy and Fairness Violation, $\Delta_k$, $k=\{$EOpp, EO, AP$\}$: we show the performance of our novel heuristics $\alpha$-\textsf{FairAvg} and $\alpha$-\textsf{FairAccRatio}  on FairFace dataset~\cite{fairface} while varying the percentage ($\alpha$) of models chosen for aggregation. The highlighted cell provides the highest accuracy or least fairness. We notice that with increase in $\alpha$ the heuristics show improved accuracy at a cost of fairness.
    }
    \label{table:fairavg1}
    \end{table}
    
\section{Effect of Varying accuracy tolerance ($a$\%) and threshold rounds ($\tau$)}

    \begin{table}[!ht]
    \centering
    \adjustbox{max width=\columnwidth}{%
    \begin{tabular}{|c|c|c|c|c|c|c|}
    \hline
    \multirow{2}{*}{\textbf{Heuristic}} &
    \multirow{2}{*}{\textbf{$a$\% , $\tau$}} & \multirow{2}{*}{\textbf{Accuracy} ($\uparrow$)} & \multicolumn{3}{c|}{Reduction in Fairness Violation ($\downarrow$)}\\
    \cline{4-6}
    & & & \textbf{$\Delta_{EOpp}$} & \textbf{$\Delta_{EO}$}& \textbf{$\Delta_{AP}$}\\
    \hline
    \multirow{4}{*}{10-\textsf{FairAvg}} 
     & 1\%, 20 & 74.2\% & 3.3 & 10.2 & 13.5\\
     & 1\%, 10 & 73.9\%  & 1.4 & 10.0 &  11.4\\
     & 2\%, 20 & 74.0\% & 1.6 & 11.1& 12.7\\
     & 2\%, 10 & 72.5\%  & 1.1 & 9.4&  10.5\\
    \hline
    \multirow{4}{*}{10-\textsf{FairAccRatio}} 
     & 1\%, 20 & 73.3\% & 0.8 & 9.6 & 10.4\\
     & 1\%, 10 & 73.0\%  &  3.7 & 10.3& 14.0 \\
     & 2\%, 20 & 74.3\% & 3.3 & 9.0& 12.3\\
     & 2\%, 10 & 73.1\%  & 1.1 & 12.0&  13.1\\
    \hline
    \end{tabular}}
    \caption{
    Accuracy and Fairness Violation, $\Delta_k$, $k=\{$EOpp, EO, AP$\}$: we show the performance of our heuristics 10-\textsf{FairAvg} and 10-\textsf{FairAccRatio} on the FairFace dataset~\cite{fairface} varying the parameters accuracy tolerance $a$ and threshold $\tau$ (Algorithm 1). We observe similar performance in terms of the trade-off between accuracy and fairness violation, irrespective of $a$ and $\tau$. In the main paper, we used $a=1$\% and $\tau = 20$ for reporting the results.
    }
    \label{table:fairavg}
    \end{table}
    
\section{Varying the Architecture :Effect of Using VGG16 Architecture}
\begin{table}[!ht]
    \centering
    \adjustbox{max width=\columnwidth}{%
    \begin{tabular}{|c|c|c|c|c|c|}
    \hline
    \multirow{2}{*}{\textbf{Heuristic}} & \multirow{2}{*}{\textbf{Accuracy} ($\uparrow$)} & \multicolumn{3}{c|}{Reduction in Fairness Violation ($\downarrow$)}\\
    \cline{3-5}
    & & \textbf{$\Delta_{EOpp}$} & \textbf{$\Delta_{EO}$}& \textbf{$\Delta_{AP}$}\\
    \hline
        \multicolumn{5}{|c|}{ResNet18}\\
    \hline
    \texttt{FedAvg-DH} & 79.7\% & 1.0  & 16.3 & 17.3\\
     \textsf{FairBest} & 72.0\% & 0.6  & 8.2 & 8.8\\
     10-\textsf{FairAvg} & 74.2\%  & 3.3  & 10.2 & 13.5 \\
     10-\textsf{FairAccAvg} & 73.3\% & 0.8 & 9.6 & 10.4 \\
    \hline
    \multicolumn{5}{|c|}{VGG~}\\
    \hline
    \texttt{FedAvg-DH} & 76.0\% & 1.9  & 13.7 & 15.6\\
     \textsf{FairBest} & 68.5\% & 0.3  & 5.7 & 6.0\\
     $10$-\textsf{FairAvg} & 71.1\%  & 1.6  & 9.8 & 11.4 \\
     $10$-\textsf{FairAccAvg} & 72.9\% & 1.1 & 9.1 & 10.2 \\
    \hline
    \end{tabular}}
    \caption{
    Accuracy and Fairness Violation, $\Delta_k$, $k=\{$EOpp, EO, AP$\}$: we show the performance of our proposed heuristics with the baseline \texttt{FedAvg-DH}~\cite{fedavg} on the FaiFace dataset~\cite{fairface}, employing ResNet18 and VGG16 architectures. Notice that, the performance of VGG16 in terms of accuracy is significantly less than Resnet18's. Also, ResNet18 training is faster compared to VGG16. Hence, we use Resnet18 architecture for the rest of our experiments.
    }
    \label{table:vgg}
    \end{table}

\section{Accuracy and Fairness Trade-off using Mahalanobis Distance}
We observe the accuracy and fairness trade-off among our heuristics based on the metric \emph{Mahalanobis Distance}. We use the percentage loss in accuracy (Error in \%) and fairness violation ($\Delta_{k}$, $k = \{EOpp, EO, AP\}$) for this calculation. We compute Mahalanobis distance (MD) as follows,
    
    $$
        MD(\overline{x}) = \sqrt{(\overline{x} - \overline{\mu})^{T} S^{-1} (\overline{x} - \overline{\mu})}
    $$
    
    where, $\overline{x}=(Error, \Delta_{k})$ is vector with observed variables: Error and Fairness Violation ($\Delta_{k}$), $\overline{\mu}$ is vector with mean values of observed variables and $S$ is covariance matrix of observed variables.

Table~\ref{table:md} shows the results using mahalanobis distance
    
    \begin{table}[!ht]
    \centering
    \adjustbox{max width=\textwidth}{
        \begin{tabular}{|c|c|c|c|c|c|c|c|c|c|}
          \hline
          & \multicolumn{3}{c|}{{FairFace}~\cite{fairface}} & \multicolumn{3}{c|}{{FFHQ}~\cite{ffhq}} & \multicolumn{3}{c|}{{UTK}~\cite{utk}} \\
          \hline
         \multirow{2}{*}{{Heuristic}} &  
         \multicolumn{3}{c|}{Mahalanobis Distance (Error (in \%), $\Delta_{k}$)} & 
         \multicolumn{3}{c|}{Mahalanobis Distance (Error (in \%), $\Delta_{k}$)} & 
         \multicolumn{3}{c|}{Mahalanobis Distance (Error (in \%), $\Delta_{k}$)}\\
         \cline{5-7} \cline{2-4} \cline{8-10}
          &  \textbf{$\Delta_{EOpp}$} & \textbf{$\Delta_{EO}$}& \textbf{$\Delta_{AP}$} & \textbf{$\Delta_{EOpp}$} & \textbf{$\Delta_{EO}$}& \textbf{$\Delta_{AP}$} & \textbf{$\Delta_{EOpp}$} & \textbf{$\Delta_{EO}$}& \textbf{$\Delta_{AP}$}\\
         \hline
         \textsl{FairBest} &8.3 & 232.3 & \cellcolor{magenta!25}{35.9} & 7.9 & 8.1 & 8.8 & 3.8 & 4.6 & 4.5 \\
         \hline
         $10$-\textsl{FairAvg} &8.0 & \cellcolor{magenta!25}{230.3} & 37.8 & 9.0 & 6.3 & 7.8 & \cellcolor{magenta!25}{1.6} & 2.9 & 2.9 \\
         \hline
         $10$-{{\textsl{FairAccAvg}}} & \cellcolor{magenta!25}{7.9} & 232.4 & 36.0 & \cellcolor{magenta!25}{6.7} & \cellcolor{magenta!25}{5.9} & \cellcolor{magenta!25}{7.3} & 3.5 & \cellcolor{magenta!25}{2.6} & \cellcolor{magenta!25}{2.5} \\
        \hline
        \end{tabular}}
        \caption{
        Mahalanobis distance calculated between $\overline{x} = (\text{Error}, \Delta_{k})$ and (0,0) for our heuristics on all the three datasets from the Fig. 6 (main paper). Lower the distance better the accuracy and fairness trade-off. The highlighted heuristic provides the better accuracy and fairness trade-off out of our proposed heuristics. We observe that 10-\textsf{FairAccAvg} performs better than the other two heuristics for EOpp on FairFace, EO and AP on UTK, and for all three fairness notions on FFHQ dataset.
        }
        \label{table:md}
    \end{table}

\end{appendix}
\end{document}